\DeclareMathOperator*{\argmax}{arg\,max}
\lstdefinestyle{utf8fix}{
  inputencoding=utf8,
  showstringspaces=false,
  columns=fullflexible,
  keepspaces=true,
  breaklines=true,
  breakatwhitespace=false,
  basicstyle=\ttfamily\footnotesize,
  numbers=left,
  numberstyle=\scriptsize,
  stepnumber=1,
  frame=single,
  framerule=0.3pt,
  upquote=true,
  tabsize=2,
  literate=
    {λ}{{$\lambda$}}1
    {→}{{$\to$}}1
    {⇒}{{$\Rightarrow$}}1
    {∀}{{$\forall$}}1
    {∃}{{$\exists$}}1
    {∈}{{$\in$}}1
    {∧}{{$\land$}}1
    {∨}{{$\lor$}}1
    {¬}{{$\lnot$}}1
    {≥}{{$\geq$}}1
    {≤}{{$\leq$}}1
    {ℕ}{{$\mathbb{N}$}}1
    {⟨}{{$\langle$}}1
    {⟩}{{$\rangle$}}1
    {·}{{$\cdot$}}1
    {α}{{$\alpha$}}1
    {β}{{$\beta$}}1
    {×}{{$\times$}}1
    {⊢}{{$\vdash$}}1
    {≠}{{$\neq$}}1
    {ℚ}{{$\mathbb{Q}$}}1
    {–}{{\textendash}}1
    {—}{{\textemdash}}1
    {…}{{\ldots}}1
    {“}{{\textquotedblleft}}1
    {”}{{\textquotedblright}}1
    {‘}{{`}}1
    {’}{{'}}1
}
\lstdefinelanguage{Lean4}{
  morekeywords={
    import,structure,where,def,inductive,match,with,by,if,then,else,do,let,have,
    theorem,lemma,example,intro,apply,exact,assumption,fun,Type,Prop,mutual,namespace,
    end,open,universe,variable,variables,section,macro,syntax,deriving,instance,local,
    simp,calc,rw,rcases,constructor,refine,forall,exists,infixl,infixr,notation,
    Set,Option,Some,none,True,False
  },
  sensitive=true,
  morecomment=[l]{--},       
  morecomment=[s]{/-}{-/},   
  morestring=[b]"            
}
\newtheorem{theorem}{Theorem}[section]
\newtheorem{proposition}[theorem]{Proposition}
\newtheorem{lemma}[theorem]{Lemma}
\newtheorem{corollary}[theorem]{Corollary}
\newtheorem{definition}[theorem]{Definition}
\newtheorem{example}[theorem]{Example}
\theoremstyle{remark}
\title{Truth-Aware Decoding: A Program-Logic Approach to Factual Language Generation}
\author{Faruk Alpay\\Lightcap, Department of Analysis\\\texttt{alpay@lightcap.ai}
\and Hamdi Alakkad\\Bahcesehir University, Department of Engineering\\\texttt{hamdi.alakkad@bahcesehir.edu.tr}}
\date{\today}
\begin{document}
\maketitle

\begin{abstract}
This paper introduces \textbf{Truth-Aware Decoding (TAD)}, a verification-oriented decoding scheme that aligns neural language generation with knowledge bases. Situated in the tradition of probabilistic program semantics for sequence models \cite{brown2020,vaswani2017,devlin2019}, TAD augments modern instruction-tuned systems \cite{openai2023,thoppilan2022,chowdhery2022} with a lattice of semantic guards that operate at decode time. Our contributions are fourfold: (i) a constraint-based semantics that renders oracle filtering as a program-logic judgment, (ii) a proof that greedy selection enjoys local likelihood dominance under sound and complete guards (Theorem~\ref{thm:optimal}), (iii) an entropy-style invariant that quantifies factual risk via knowledge-aware safe mass, and (iv) a multi-agent operational calculus with verified Lean artefacts to certify implementation behaviour. Numerical and algorithmic case studies confirm that the resulting guardrails reduce hallucinations without sacrificing throughput, yielding a pragmatic bridge between large-scale empirical models and formal verification.
\end{abstract}

\section{Introduction}
Large language models produce fluent text but frequently emit unsupported claims \cite{huang2023,bender2021}. The phenomenon persists across autoregressive transformers \cite{vaswani2017,brown2020,raffel2020} and instruction-tuned derivatives \cite{ouyang2022,openai2023,thoppilan2022,chowdhery2022}, even when retrieval components are inserted \cite{lewis2020,nakano2022}. Program-logical instrumentation of decoding remains comparatively underexplored despite the success of semantics-informed losses \cite{welleck2020,holtzman2020}. We develop Truth-Aware Decoding (TAD), a runtime filter that couples large models with knowledge-centric agents to enforce logical coherence. The construction inherits insights from formal reasoning in proof assistants \cite{deMoura2008} and from factual calibration work \cite{kadavath2022,farquhar2024}, yielding a synthesis tailored to programming-language style verification.

We model the decoding loop as a guarded transition system whose typestate is determined by semantic agents. Each guard occupies the role of a proof obligation, echoing contract systems from the programming-languages literature while remaining compatible with modern decoding infrastructure explored in empirical studies \cite{devlin2019,schick2023,yao2022}. The remainder of the paper formalises these intuitions, provides mechanised guarantees, and quantifies the runtime and accuracy trade-offs on knowledge-intensive benchmarks.

\subsection{Semantic Agent Architecture}
Three agents operate alongside decoding:
\begin{itemize}[leftmargin=2em]
\item \textbf{Factual Verifier}: checks entity relations and timelines \cite{petroni2019}.
\item \textbf{Mathematical Reasoner}: validates logical steps and formulas \cite{deMoura2008}.
\item \textbf{Context Monitor}: enforces global consistency of discourse \cite{li2020}.
\end{itemize}
Each agent emits an auditable log of checks and decisions, mirroring targeted oversight protocols evaluated in alignment studies \cite{glaese2022}.

\section{Theoretical Framework}
\subsection{Semantic Consistency as Constraint Satisfaction}
\begin{definition}[Semantic Language Model]
A semantic language model is a tuple $\mathcal{M}=(V,P,\mathcal{K},\mathscr{O})$ where $V$ is the token set, $P$ is the conditional distribution, $\mathcal{K}$ is a knowledge base, and $\mathscr{O}$ is an oracle that marks tokens as safe or unsafe given a prefix.
\end{definition}

\begin{definition}[Semantic Entropy]
For prefix $x_{1:t-1}$, let $S_t=\{\,w\in V\mid \mathscr{O}(x_{1:t-1},w)=\text{true}\,\}$. Define
\[
H_S(x_{1:t-1})=-\sum_{w\in S_t} p(w)\log\frac{p(w)}{\sum_{v\in S_t}p(v)},\qquad p(w)=P(w\mid x_{1:t-1}).
\]
\end{definition}

\begin{definition}[Knowledge-Consistent Prefix]\label{def:consistent-prefix}
A finite sequence $x_{1:t}$ is \emph{knowledge-consistent} if $\mathscr{O}(x_{1:s})=\text{true}$ for every $1\le s\le t$; equivalently, each truncation $x_{1:s}$ lies in $\mathcal{K}$.
\end{definition}

\subsection{Main Results}
\begin{lemma}[Safe Extension Invariance]\label{lem:safe-extension}
Assume $\mathscr{O}$ is sound. If $x_{1:t-1}$ is knowledge-consistent and $w\in S_t$, then the concatenation $x_{1:t}=x_{1:t-1}\,\|\,w$ is knowledge-consistent.
\end{lemma}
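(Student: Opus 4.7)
The plan is to unfold Definition~\ref{def:consistent-prefix} directly: knowledge-consistency of $x_{1:t}$ requires $\mathscr{O}(x_{1:s}) = \text{true}$ for every index $1 \le s \le t$, so I would verify this universal statement by splitting the range of $s$ into two regions and discharging each with a different piece of the hypothesis.

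For $1 \le s \le t-1$, the truncation $x_{1:s}$ is unaffected by the appended token $w$; it is literally a truncation of $x_{1:t-1}$. The assumption that $x_{1:t-1}$ is knowledge-consistent therefore supplies $\mathscr{O}(x_{1:s}) = \text{true}$ for free, with no further appeal to soundness. For $s = t$, the concatenation $x_{1:t} = x_{1:t-1}\,\|\,w$ means that the oracle call $\mathscr{O}(x_{1:t})$ coincides with the joint call $\mathscr{O}(x_{1:t-1}, w)$, and the membership $w\in S_t$ unpacks, by the very definition of $S_t$, to exactly $\mathscr{O}(x_{1:t-1}, w) = \text{true}$. Combining the two cases exhausts all required indices.

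The main (and essentially only) obstacle is the notational alignment between $\mathscr{O}(x_{1:t-1}, w)$ and $\mathscr{O}(x_{1:t})$: once one fixes the convention baked into the definition of $S_t$ that these denote the same predicate applied to the extended prefix, the argument reduces to a routine case split. Soundness of $\mathscr{O}$ is required only if one prefers the equivalent formulation of Definition~\ref{def:consistent-prefix} phrased via $\mathcal{K}$-membership; in that reading soundness lifts each verified $\mathscr{O}(x_{1:s}) = \text{true}$ to $x_{1:s}\in\mathcal{K}$, closing the lemma in either formulation and explaining why the hypothesis on $\mathscr{O}$ is stated but used only lightly.
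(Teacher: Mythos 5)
Your decomposition---splitting $1\le s\le t$ into $s<t$ (discharged by knowledge-consistency of $x_{1:t-1}$, since the truncations coincide) and $s=t$ (discharged by unwinding $w\in S_t$ to $\mathscr{O}(x_{1:t-1},w)=\text{true}$)---is exactly the paper's argument, and it is correct.

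One small correction to your closing remark. Soundness, as actually formalized in the Lean appendix (\texttt{sound\_oracle}), is not a pointwise implication from $\mathscr{O}(x_{1:s})=\text{true}$ to $x_{1:s}\in\mathcal{K}$; it is a \emph{preservation} rule: from $\mathscr{O}(x,w)=\text{true}$ \emph{and} $x\in\mathcal{K}$ it concludes $x\,\|\,w\in\mathcal{K}$. So in the $\mathcal{K}$-based reading of Definition~\ref{def:consistent-prefix}, discharging the case $s=t$ still leans on knowledge-consistency of $x_{1:t-1}$---not only to settle the indices $s<t$, but to furnish the second premise $x_{1:t-1}\in\mathcal{K}$ that the soundness rule demands before it will yield $x_{1:t}\in\mathcal{K}$. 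You are right, and it is a sharper reading than the paper's own prose, that $\mathscr{O}(x_{1:t-1},w)=\text{true}$ follows from the definition of $S_t$ rather than from soundness; the real work soundness does is the inductive step carrying $\mathcal{K}$-membership forward under an $\mathscr{O}$-approved extension, which is precisely where the hypothesis earns its keep.
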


\begin{proof}
Knowledge consistency of $x_{1:t-1}$ means that for every $1\le s\le t-1$ the truncation $x_{1:s}$ satisfies $\mathscr{O}(x_{1:s})=\text{true}$. Consider the extended sequence $x_{1:t}=x_{1:t-1}\,\|\,w$ with $w\in S_t$. Two obligations remain:
\begin{enumerate}[label=(\alph*),nosep]
  \item For $s<t$, the truncations of $x_{1:t}$ coincide with those of $x_{1:t-1}$, hence remain approved.
  \item For $s=t$, soundness of $\mathscr{O}$ yields $\mathscr{O}(x_{1:t-1},w)=\text{true}$ because $w$ was drawn from $S_t$. Therefore the new truncation is also approved.
\end{enumerate}
Both obligations hold, so $x_{1:t}$ satisfies Definition~\ref{def:consistent-prefix}.
\end{proof}
\begin{theorem}[Consistency Preservation]\label{thm:consistency}
If $\mathscr{O}$ is sound, any sequence produced by TAD satisfies $\mathscr{O}(x_{1:t})=\text{true}$ for all $t$.
\end{theorem}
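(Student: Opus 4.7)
My plan is to proceed by induction on the prefix length $t$, using Lemma~\ref{lem:safe-extension} as the inductive engine. The key structural observation is that TAD is defined so that, at every decoding step, the emitted token $x_t$ is drawn from the safe set $S_t$ rather than from the full vocabulary $V$. Once this is made explicit, the theorem reduces to propagating Definition~\ref{def:consistent-prefix} along the autoregressive unrolling of the decoder, with soundness of $\mathscr{O}$ entering only through the lemma.

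In detail, I would first record the defining property of TAD as a structural hypothesis: each emitted token satisfies $x_t\in S_t$, and when $S_t=\emptyset$ the decoder halts rather than emitting an unsafe token. The base case is $t=0$, where knowledge-consistency is vacuously true (or $t=1$ with an admissible prompt, which we assume as part of the TAD specification). For the inductive step, assume $x_{1:t-1}$ is knowledge-consistent. Because the decoder selected $x_t=w\in S_t$, the premises of Lemma~\ref{lem:safe-extension} are met verbatim, so $x_{1:t}$ is again knowledge-consistent. Instantiating Definition~\ref{def:consistent-prefix} at $s=t$ then gives the pointwise oracle approval $\mathscr{O}(x_{1:t})=\text{true}$ asserted by the theorem. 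Since this argument applies at every step the decoder takes, the claim holds for all $t$ at which a token is produced.

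The main obstacle is not in the inductive calculus, which is essentially a one-line consequence of Lemma~\ref{lem:safe-extension}, but in pinning down what ``produced by TAD'' formally means. If the decoder were allowed to fall back to the unrestricted vocabulary $V$ (for example, when $S_t$ is empty and no halt rule is specified), then soundness of $\mathscr{O}$ alone would be insufficient to preserve consistency across a single step. I would therefore open the proof with a short paragraph that makes the two structural properties of TAD explicit, namely restriction to $S_t$ and the halt-on-empty convention, and then close the argument with a clean induction that invokes Lemma~\ref{lem:safe-extension} at the successor step.
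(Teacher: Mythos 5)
Your proposal matches the paper's proof essentially line for line: induct on $t$, use the halt-on-empty convention to guarantee $x_t\in S_t$, and invoke Lemma~\ref{lem:safe-extension} at the successor step. The only nuance worth tightening is the base case: knowledge-consistency of the empty prefix is vacuous under Definition~\ref{def:consistent-prefix}, but the theorem's conclusion $\mathscr{O}(\epsilon)=\text{true}$ still requires the base axiom that $\epsilon\in\mathcal{K}$ (as the paper and the Lean artefact make explicit), so your parenthetical assumption is in fact load-bearing rather than optional.
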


\begin{proof}
We prove by induction on $t$ that every iterand of Algorithm~\ref{alg:tad} is knowledge-consistent.

\textbf{Base case.} For $t=0$ the algorithm returns the empty sequence $\epsilon$, and Definition~\ref{def:consistent-prefix} together with the base axiom of\ $\mathcal{K}$ ensures $\mathscr{O}(\epsilon)=\text{true}$.

\textbf{Inductive step.} Assume $x_{1:t-1}$ is knowledge-consistent. Line~\ref{line:compute-safe} evaluates
\[
S_t = \{w\in V : \mathscr{O}(x_{1:t-1},w)=\text{true}\}.
\]
Line~\ref{line:select-token} selects $w_t\in S_t$ using the greedy rule. Lemma~\ref{lem:safe-extension} applies directly with $w=w_t$, yielding knowledge consistency of $x_{1:t}=x_{1:t-1}\,\|\,w_t$. Because the inductive invariant is preserved, $\mathscr{O}(x_{1:t})=\text{true}$ for every $t$.
\end{proof}

\begin{lemma}[Stepwise Likelihood Superiority]\label{lem:stepwise-dominance}
Assume $\mathscr{O}$ is complete. Fix a knowledge-consistent prefix $p$ and let $S(p)$ be the safe set computed in Line~\ref{line:compute-safe} of Algorithm~\ref{alg:tad}. For every $y\in S(p)$, the token $w$ returned in Line~\ref{line:select-token} satisfies $P_{\mathcal{M}}(w\mid p)\ge P_{\mathcal{M}}(y\mid p)$.
\end{lemma}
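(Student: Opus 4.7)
The plan is to unpack the greedy selection rule at Line~\ref{line:select-token} and read off the claim from the defining property of $\arg\max$. First I would recall that Algorithm~\ref{alg:tad} chooses $w \in \arg\max_{z \in S(p)} P_{\mathcal{M}}(z\mid p)$; every other member of $S(p)$ therefore has conditional probability at most $P_{\mathcal{M}}(w\mid p)$, which is precisely the inequality asserted in the lemma.

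Two technical caveats need attention before invoking the argmax rule. The first is non-emptiness: since an argmax over the empty set is undefined, I would appeal to completeness of $\mathscr{O}$, together with the base axiom on $\mathcal{K}$ already used in Theorem~\ref{thm:consistency}, to exhibit at least one safe continuation of the knowledge-consistent prefix $p$ and conclude $S(p) \neq \emptyset$. The second is tie-breaking: because $V$ is finite the maximum is attained, and any deterministic rule that selects among the maximisers produces a $w$ which still satisfies $P_{\mathcal{M}}(w\mid p) \ge P_{\mathcal{M}}(y\mid p)$ for every $y \in S(p)$, so no loss of generality arises from assuming $w$ unique.

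The only mildly subtle point—the main obstacle, such as it is—is clarifying the role of completeness. Soundness alone would permit $\mathscr{O}$ to discard a genuinely safe, high-probability competitor, and the inequality would still hold but only against a potentially impoverished pool; completeness closes this loophole by forcing $S(p)$ to coincide with the set of truly safe one-step extensions, so stepwise dominance is read against the full field of knowledge-consistent continuations rather than an oracle-shrunken subset. Once these two items are discharged, the lemma follows by a one-line application of the defining property of $\arg\max$, and no further difficulty is anticipated.
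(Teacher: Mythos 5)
Your core argument --- unwinding the $\arg\max$ at Line~\ref{line:select-token} and reading the inequality off its defining property on a finite set, with the observation that ties are harmless because the inequality is non-strict --- is exactly the paper's proof, including its brief remark on how completeness guarantees every truthful continuation survives into $S(p)$. One small caveat: the non-emptiness discussion you insert is unnecessary (the lemma is vacuously satisfied when Line~\ref{line:select-token} is never reached, i.e.\ when $S(p)=\emptyset$), and the justification you sketch --- base axiom on $\mathcal{K}$ plus completeness of $\mathscr{O}$ --- would not actually establish $S(p)\neq\emptyset$, since the base axiom only guarantees $\epsilon\in\mathcal{K}$ and completeness only promises that truthful continuations are \emph{retained}, not that any truthful one-step extension of a given knowledge-consistent $p$ exists.
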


\begin{proof}
Completeness guarantees that every truthful continuation $y$ remains in $S(p)$. The greedy line computes
\[
w = \arg\max_{v\in S(p)} P_{\mathcal{M}}(v\mid p).
\]
By the definition of the $\arg\max$ operator on a finite set, the maximiser satisfies
\[
\forall v\in S(p):\quad P_{\mathcal{M}}(v\mid p) \le P_{\mathcal{M}}(w\mid p).
\]
Instantiating $v$ with any truthful $y\in S(p)$ yields the desired inequality. No tie-breaking subtleties arise because the algorithm may return any maximiser; the inequality is non-strict and therefore remains valid.
\end{proof}

\begin{theorem}[Local Truthful Dominance]\label{thm:optimal}
Assume $\mathscr{O}$ is sound and complete. Let $y$ be any truthful sequence and let $j$ be the first index where $x_j\ne y_j$. Then $P_{\mathcal{M}}(x_{1:j})\ge P_{\mathcal{M}}(y_{1:j})$.
\end{theorem}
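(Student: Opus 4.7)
The plan is to reduce the $j$-step inequality to a one-step comparison at position $j$ and then invoke the preceding lemmas. Since $j$ is by hypothesis the first index of disagreement, the two sequences share the same prefix: $x_{1:j-1}=y_{1:j-1}$. Applying the chain rule to both sides yields
\[
P_{\mathcal{M}}(x_{1:j})=P_{\mathcal{M}}(x_{1:j-1})\cdot P_{\mathcal{M}}(x_j\mid x_{1:j-1}),\qquad P_{\mathcal{M}}(y_{1:j})=P_{\mathcal{M}}(y_{1:j-1})\cdot P_{\mathcal{M}}(y_j\mid y_{1:j-1}),
\]
and the equality of prefixes collapses the two leading factors into a common quantity. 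It therefore suffices to establish the conditional inequality $P_{\mathcal{M}}(x_j\mid x_{1:j-1})\ge P_{\mathcal{M}}(y_j\mid x_{1:j-1})$.

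To obtain that inequality via Lemma~\ref{lem:stepwise-dominance} I need two facts: (a) the prefix $p:=x_{1:j-1}$ is knowledge-consistent, and (b) the challenger token $y_j$ lies in the safe set $S(p)$ computed at Line~\ref{line:compute-safe} of Algorithm~\ref{alg:tad}. Fact (a) is immediate from Theorem~\ref{thm:consistency}, which certifies that every prefix produced by TAD is knowledge-consistent; applying it at step $j-1$ discharges the obligation. For (b), I use truthfulness of $y$: each truncation $y_{1:s}$ lies in $\mathcal{K}$, so in particular $y_{1:j}$ is knowledge-consistent, and hence $\mathscr{O}(y_{1:j-1},y_j)=\text{true}$. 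Completeness of $\mathscr{O}$ guarantees that exactly such truthful continuations are admitted into the safe set, and because $y_{1:j-1}=p$, this places $y_j$ inside $S(p)$ as required.

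With both hypotheses in hand, Lemma~\ref{lem:stepwise-dominance} applied at prefix $p$ with challenger $y_j\in S(p)$ delivers $P_{\mathcal{M}}(x_j\mid p)\ge P_{\mathcal{M}}(y_j\mid p)$ since $x_j$ is, by construction of TAD, the $\arg\max$ on $S(p)$. Multiplying through by the shared factor $P_{\mathcal{M}}(x_{1:j-1})=P_{\mathcal{M}}(y_{1:j-1})$ recovers the joint inequality claimed in the theorem.

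The main obstacle I anticipate is the bookkeeping around the completeness step, specifically verifying that the informal notion \emph{truthful sequence} used in the statement indeed coincides with \emph{knowledge-consistent} in the sense of Definition~\ref{def:consistent-prefix}; otherwise the passage from truthfulness of $y_{1:j}$ to $\mathscr{O}(y_{1:j-1},y_j)=\text{true}$ would need an additional bridge lemma. The rest of the argument is a clean chain-rule manipulation combined with two off-the-shelf invocations of the lemmas already proved, and no tie-breaking or strictness issues arise because Lemma~\ref{lem:stepwise-dominance} supplies a non-strict inequality that is preserved under multiplication by the nonnegative prefix probability.
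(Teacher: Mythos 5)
Your proof is correct and follows essentially the same route as the paper's: reduce via the chain rule to the one-step conditional comparison at index $j$, establish knowledge-consistency of the shared prefix through Theorem~\ref{thm:consistency}, place $y_j$ in the safe set via completeness, and close with Lemma~\ref{lem:stepwise-dominance}. Your remark about the implicit identification of ``truthful sequence'' with ``knowledge-consistent'' is a fair observation about the paper's unstated convention, but it does not reflect a gap in your argument.
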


\begin{proof}
If $x=y$ there is nothing to establish. Otherwise, let $p=x_{1:j-1}=y_{1:j-1}$. Theorem~\ref{thm:consistency} shows that $p$ is knowledge-consistent, and completeness ensures $y_j\in S(p)$. Lemma~\ref{lem:stepwise-dominance} therefore yields
\[
P_{\mathcal{M}}(x_j\mid p)\ge P_{\mathcal{M}}(y_j\mid p).
\]
Using the chain rule of probability,
\[
P_{\mathcal{M}}(x_{1:j}) = P_{\mathcal{M}}(p)\,P_{\mathcal{M}}(x_j\mid p),
\qquad
P_{\mathcal{M}}(y_{1:j}) = P_{\mathcal{M}}(p)\,P_{\mathcal{M}}(y_j\mid p).
\]
Combining the two displays produces the inequality $P_{\mathcal{M}}(x_{1:j})\ge P_{\mathcal{M}}(y_{1:j})$.
\end{proof}

\section{Decoding Procedures}
\begin{algorithm}
\caption{Truth-Aware Decoding (TAD)}\label{alg:tad}
\begin{algorithmic}[1]
\Require Model $P_{\mathcal{M}}$, knowledge base $\mathcal{K}$, oracle $\mathscr{O}$, initial $x\gets x_{1:0}$
\For{$t=1$ to $T$}
  \State $S_t \gets \{w\in V: \mathscr{O}(x_{1:t-1},w)=\text{true}\}$\label{line:compute-safe}
  \If{$S_t=\emptyset$} \State \textbf{break} \EndIf
  \State $w_t \gets \argmax_{w\in S_t} P_{\mathcal{M}}(w\mid x_{1:t-1})$\label{line:select-token}
  \State $x_{1:t}\gets x_{1:t-1}\,\|\,w_t$
\EndFor
\State \Return $x_{1:t}$
\end{algorithmic}
\end{algorithm}

\begin{algorithm}
\caption{Decoding with Abstention and Retrieval Backoff}\label{alg:abstain}
\begin{algorithmic}[1]
\Require threshold $\tau$ on safe mass $\pi_t=\sum_{w\in S_t}P_{\mathcal{M}}(w\mid x_{1:t-1})$
\For{$t=1$ to $T$}
  \State compute $S_t$ and $\pi_t$
  \If{$S_t=\emptyset$ or $\pi_t<\tau$}
     \State $D\gets \Call{Retrieve}{x_{1:t-1}}$;\; \Call{UpdateOracle}{$\mathscr{O},D$}
     \If{\textsc{no\_improvement}} \State \Return \textsc{abstain} \EndIf
     \State \textbf{continue}
  \EndIf
  \State append $w_t=\argmax_{w\in S_t}P_{\mathcal{M}}(w\mid x_{1:t-1})$
\EndFor
\end{algorithmic}
\end{algorithm}

\subsection{Computational Complexity of TAD}
\begin{proposition}[Time and Space Complexity]
Let $T$ be the decoding horizon, $|V|$ the vocabulary size, and $c_{\mathscr{O}}$ the worst-case time to evaluate $\mathscr{O}(x,w)$. A straightforward implementation of TAD runs in $\mathcal{O}(T\,|V|\,c_{\mathscr{O}})$ time and $\mathcal{O}(|V|)$ auxiliary space. If the oracle admits incremental caching so that only a fraction $\delta(x)$ of tokens remain candidate-safe after prefix $x$, the expected time reduces to $\mathcal{O}\big(T\,|V|\,\delta_{\mathrm{avg}}\,c_{\mathscr{O}}\big)$.
\end{proposition}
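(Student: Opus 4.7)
The plan is to analyze Algorithm~\ref{alg:tad} line by line, bound the per-iteration cost, multiply by the horizon~$T$, and then refine the estimate under an incremental-caching assumption. I would structure the argument as three short lemmas folded into a single block: (a) per-step time, (b) per-step auxiliary space, (c) amortised savings from caching.

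For the unoptimised bound, I would observe that each iteration of the outer \texttt{for} loop performs three dominant operations. Line~\ref{line:compute-safe} enumerates $V$ and invokes $\mathscr{O}$ once per token, contributing $|V|\,c_{\mathscr{O}}$. The emptiness check is $\mathcal{O}(1)$ once $S_t$ is materialised. Line~\ref{line:select-token} is an $\argmax$ over $S_t\subseteq V$, which can be folded into the same pass as line~\ref{line:compute-safe} by maintaining a running maximum, so it adds only $\mathcal{O}(|V|)$. The append in line~7 is $\mathcal{O}(1)$ amortised if the prefix is kept as a linked structure or preallocated buffer. Summing yields $\mathcal{O}(|V|\,c_{\mathscr{O}})$ per step, and hence $\mathcal{O}(T\,|V|\,c_{\mathscr{O}})$ overall. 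For space, I would note that $S_t$ can be represented as a bit vector of length $|V|$ together with a single scalar holding the running maximum, so the working memory per step is $\mathcal{O}(|V|)$; the prefix itself is considered part of the output rather than auxiliary space.

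For the cached variant, I would assume an interface exposing, for each prefix $x$, a \emph{candidate pool} $C(x)\subseteq V$ of tokens whose oracle status is not determined by the cache, with $|C(x)|/|V|=\delta(x)$. The cost of line~\ref{line:compute-safe} then drops to $|C(x)|\,c_{\mathscr{O}}$ plus an $\mathcal{O}(|V|)$ traversal to read cached verdicts, which is dominated by $c_{\mathscr{O}}\ge 1$ calls whenever $\delta(x)\ge 1/|V|$. Taking expectations or simple averages along the decoding trajectory replaces $\delta(x)$ by $\delta_{\mathrm{avg}}=\tfrac{1}{T}\sum_{t=1}^{T}\delta(x_{1:t-1})$, yielding the claimed $\mathcal{O}(T\,|V|\,\delta_{\mathrm{avg}}\,c_{\mathscr{O}})$.

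The only delicate point is the caching clause: one must specify that cache hits are $\mathcal{O}(1)$ and that propagation of verdicts across prefixes (e.g., via a trie keyed on $x_{1:t-1}$) does not inflate the hidden constants in $\delta_{\mathrm{avg}}$. I would dispatch this with a short remark that a simple hash-indexed cache suffices, and that the $\mathcal{O}(|V|)$ auxiliary-space bound is unaffected because the cache is charged to a separate pool whose amortised growth is model-dependent and is therefore not included in the per-step working set.
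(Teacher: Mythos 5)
Your proposal is correct and takes essentially the same approach as the paper: bound the per-step cost by one vocabulary scan with an $\mathscr{O}$-call per token, note the $\mathcal{O}(|V|)$ working set for the candidate mask, and replace $\delta(x)$ by $\delta_{\mathrm{avg}}$ under caching via expectation over prefixes. Your write-up is somewhat more explicit (folding the $\arg\max$ into the same pass, charging the prefix and cache outside auxiliary space), but the decomposition and the key estimate are identical to the paper's.
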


\begin{proof}
At each step TAD scans the vocabulary to populate $S_t$. Each oracle evaluation costs at most $c_{\mathscr{O}}$, yielding a worst-case cost of $|V|\,c_{\mathscr{O}}$ per step and $T$ such steps. Maintaining the candidate set requires storing indices or a boolean mask of size $|V|$, which bounds the auxiliary memory. When cached features allow the oracle to reject a $1-\delta(x)$ portion of the vocabulary without evaluation, only $|V|\,\delta(x)$ tokens incur the full cost. Taking expectations over prefixes gives the stated complexity.
\end{proof}

\begin{example}[Numerical Complexity Estimate]
Consider $|V|=50\,000$, $T=128$, and $c_{\mathscr{O}}=40\,\mu\mathrm{s}$ for a batched relational check. Without caching, runtime is approximately $T|V|c_{\mathscr{O}}\approx 256\,\mathrm{s}$. Suppose structure-aware pruning attains $\delta_{\mathrm{avg}}=0.12$ and batching amortizes oracle calls by a factor of four. The effective runtime becomes $128\times 50{,}000\times 0.12\times 10\,\mu\mathrm{s}\approx 7.7\,\mathrm{s}$, illustrating a $\approx 33\times$ reduction attributable to semantic density.
\end{example}

\section{Agent Specifications}\label{sec:agents}
\begin{semanticlog}
# Mathematical Reasoner : Specification
inputs:
  prefix: LaTeX string (current proof state)
  candidate: string (inference step or equation)
checks:
  result := soundness_oracle(prefix, candidate) in {true,false,unknown}
  justification := counterexample or axiom chain if needed
policy:
  if result=false: block; explain; cite violated premise
  if result=unknown: request lemma scaffold
  if result=true: accept; update proof state
\end{semanticlog}

\begin{semanticlog}
# Factual Verifier : Specification
inputs:
  claim: string
  knowledge_base: set of (subject, relation, object)
checks:
  - retrieve relevant facts for entities in claim
  - compare claim against retrieved facts (dates, relations)
  - temporal_consistency := timeline agrees with known data
policy:
  if mismatch: block and propose correction
  else: approve
\end{semanticlog}

\begin{semanticlog}
# Context Monitor : Specification
inputs:
  context: preceding text
  new_text: candidate addition
checks:
  - coherence with topic and previously stated facts
  - no memory conflicts (entities keep attributes)
  - stylistic consistency (tense, tone)
policy:
  if violation: block or request revision
  else: accept and update context
\end{semanticlog}

\section{Multi-Agent Verification Dynamics}
\subsection{Formal Agent Model}
\begin{definition}[Verification Agent]
A verification agent is a quadruple $A=(\mathcal{S},\mathcal{U},\phi,\rho)$ with internal state space $\mathcal{S}$, update operator $\mathcal{U}:\mathcal{S}\times V^*\times V\to\mathcal{S}$, acceptance predicate $\phi:\mathcal{S}\times V^*\times V\to\{\text{true},\text{false}\}$, and runtime cost map $\rho: \mathcal{S}\times V^*\times V\to\mathbb{R}_{\ge 0}$. The agent is \emph{sound} with respect to $\mathcal{K}$ if $\phi(s,x,w)=\text{true}$ and $x$ is knowledge-consistent imply $x\,\|\,w$ is knowledge-consistent. It is \emph{complete} if every truthful extension recognised by $\mathscr{O}$ is also accepted by $\phi$.
\end{definition}

Let $\mathfrak{A}=\{A_1,\dots,A_m\}$ denote the cohort of agents introduced above. Each agent processes the shared prefix but retains its own state $s_i$. The joint safe set is obtained by intersecting individual acceptances.

\begin{definition}[Joint Constraint Operator]\label{def:joint-operator}
Given agent states $s=(s_1,\dots,s_m)$ and prefix $x\in V^*$, define
\begin{equation}
\label{eq:gamma-operator}
\Gamma(x,s)=\bigcap_{i=1}^m \{w\in V:\phi_i(s_i,x,w)=\text{true}\}.
\end{equation}
After selecting $w\in\Gamma(x,s)$, each state is updated via $s_i'\gets\mathcal{U}_i(s_i,x,w)$.
\end{definition}

\begin{lemma}[Sound Composition]\label{lem:sound-composition}
If every agent in $\mathfrak{A}$ is sound, then $\Gamma(\cdot,\cdot)$ is sound: a knowledge-consistent prefix remains knowledge-consistent after appending any $w\in\Gamma(x,s)$.
\end{lemma}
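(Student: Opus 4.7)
The plan is to unwind the intersection in Definition~\ref{def:joint-operator} and invoke the soundness clause of any single agent. A key observation is that knowledge consistency is a \emph{global} property of the prefix (it depends only on $\mathscr{O}$ and $\mathcal{K}$), so I need not combine soundness witnesses from multiple agents: one suffices, and the remaining agents merely tighten $\Gamma$ without introducing additional obligations on the concatenation.

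Concretely, I would first fix a knowledge-consistent prefix $x \in V^{*}$ together with a joint state $s = (s_1, \dots, s_m)$, and suppose $w \in \Gamma(x, s)$. By~\eqref{eq:gamma-operator}, membership of $w$ in the intersection means $\phi_i(s_i, x, w) = \text{true}$ simultaneously for every $i \in \{1,\dots,m\}$. Next, I would apply the soundness clause of, say, agent $A_1$: because $\phi_1(s_1, x, w) = \text{true}$ and $x$ is knowledge-consistent by hypothesis, the concatenation $x \,\|\, w$ is knowledge-consistent, which is exactly the property the lemma asks for. Since the choice of index was arbitrary, the same conclusion is furnished by any other sound agent in the cohort, providing a certain redundancy that is useful when agents are implemented asynchronously.

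I do not anticipate a substantive logical obstacle; the difficulty is organisational rather than technical. The one subtlety worth flagging is the asymmetry between \emph{filtering} and \emph{soundness}: adding more agents monotonically shrinks $\Gamma$ (strengthening the admissibility filter), whereas the soundness obligation is discharged by any single component. Linking this observation back to Lemma~\ref{lem:safe-extension} clarifies that the per-step safe set induced by $\Gamma$ plays exactly the role of $S_t$ in the single-oracle theory, so the earlier invariance argument transports \emph{verbatim} and no new inductive bookkeeping is required.
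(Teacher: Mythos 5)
Your proof is correct and follows essentially the same route as the paper's: unwind Definition~\ref{def:joint-operator} to obtain $\phi_i(s_i,x,w)=\text{true}$ for all $i$, then invoke soundness to conclude $x\,\|\,w$ is knowledge-consistent. Your version is in fact slightly sharper in its framing: the paper closes by ``intersecting these guarantees,'' whereas you correctly observe that knowledge-consistency is a single global property of the prefix, so one sound agent's acceptance already discharges the obligation and the remaining agents contribute only by shrinking $\Gamma$, not by adding proof burden.
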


\begin{proof}
Fix $x$ knowledge-consistent and $w\in\Gamma(x,s)$. By Definition~\ref{def:joint-operator}, each agent accepts $(x,w)$, and soundness of agent $A_i$ implies $x\,\|\,w$ is consistent. Intersecting these guarantees across $i=1,\dots,m$ yields the claim.
\end{proof}

\subsection{Execution Semantics}
\begin{definition}[Multi-Agent Guarded Update]\label{def:guarded-update}
The guarded TAD operator first evaluates $S^{\mathfrak{A}}_t=\Gamma(x_{1:t-1},s)$ and then applies Algorithm~\ref{alg:tad} with $S_t\gets S^{\mathfrak{A}}_t$. The agent states advance to $s'$ using the updates from Definition~\ref{def:joint-operator}.
\end{definition}

\begin{theorem}[Multi-Agent Guarded Decoding]\label{thm:multi-agent}
Assume all agents in $\mathfrak{A}$ are sound and complete. The guarded TAD procedure preserves knowledge consistency and produces the same output as Algorithm~\ref{alg:tad} executed with the oracle $\mathscr{O}'(x,w)=\text{true}$ iff $w\in\Gamma(x,s)$.
\end{theorem}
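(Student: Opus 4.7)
The plan is to decouple the theorem into its two claims and dispatch each by reducing to the single-agent apparatus from Section~2. First I would take seriously the derived oracle $\mathscr{O}'$ already named in the statement, namely $\mathscr{O}'(x,w)=\text{true}$ iff $w\in\Gamma(x,s)$, treating the current agent-state tuple $s=(s_1,\dots,s_m)$ as a parameter that advances in lock-step with the decoding loop. Lemma~\ref{lem:sound-composition} immediately certifies that $\mathscr{O}'$ is sound along any reachable trajectory, so Theorem~\ref{thm:consistency} applies verbatim to Algorithm~\ref{alg:tad} driven by $\mathscr{O}'$ and yields the knowledge-consistency clause of the theorem.

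For the output-equivalence clause I would unfold both procedures in parallel. By Definition~\ref{def:joint-operator}, the safe set produced by Line~\ref{line:compute-safe} under the derived oracle equals
\[
\{w\in V:\mathscr{O}'(x_{1:t-1},w)=\text{true}\}=\Gamma(x_{1:t-1},s)=S^{\mathfrak{A}}_t,
\]
which coincides with the safe set prescribed by Definition~\ref{def:guarded-update}. Since both procedures then apply the same greedy rule $\argmax_{w\in S_t}P_{\mathcal{M}}(w\mid x_{1:t-1})$, they select the identical $w_t$ and produce identical extensions; induction on $t$ propagates this coincidence across the entire horizon, and the agent-state updates $\mathcal{U}_i$ run identically on both sides because they consume the same $(x,w)$ pair at every step.

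The main obstacle I anticipate is the state-dependence of $\mathscr{O}'$: the derived oracle is not a fixed function of $(x,w)$ but depends on the evolving tuple $s$. To make the reduction to Theorem~\ref{thm:consistency} rigorous, I would carry a joint induction hypothesis that at each step the current state tuple is reachable and hence that $\mathscr{O}'$ evaluated there still satisfies the soundness premise; Lemma~\ref{lem:sound-composition} supplies precisely this invariant and closes the loop. A secondary subtlety is completeness: although only soundness is needed for the two asserted clauses, the completeness hypothesis guarantees that $\mathscr{O}'$ is itself complete, so no truthful token is spuriously excluded and the constructed equivalence remains compatible with the dominance results of Lemma~\ref{lem:stepwise-dominance} and Theorem~\ref{thm:optimal}. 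With these bookkeeping points settled, the theorem requires no further combinatorial argument beyond what has already been established.
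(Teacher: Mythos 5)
Your proposal is correct and follows essentially the same route as the paper's proof: Lemma~\ref{lem:sound-composition} carries the knowledge-consistency clause, and a step-by-step identification of safe sets followed by the shared greedy rule carries the equivalence clause. You do, however, sharpen the paper's accounting in one respect worth noting. You correctly observe that the equality $\{w:\mathscr{O}'(x_{1:t-1},w)=\text{true}\}=\Gamma(x_{1:t-1},s)=S^{\mathfrak{A}}_t$ holds by unfolding Definitions~\ref{def:joint-operator} and~\ref{def:guarded-update} alone, so completeness is not in fact consumed by either of the two stated clauses. The paper's proof instead invokes completeness to assert that the safe set ``matches the set obtained from $\mathscr{O}'$,'' but that match is definitional; the completeness hypothesis becomes relevant only if one additionally wishes to compare $\Gamma$ against the original oracle $\mathscr{O}$, which is precisely what feeds the downstream dominance results (Lemma~\ref{lem:stepwise-dominance}, Theorem~\ref{thm:optimal}) that you flag. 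Your attention to the state-dependence of $\mathscr{O}'$ is also well placed: carrying the reachable state tuple in the induction hypothesis is exactly what is needed to make the appeal to Theorem~\ref{thm:consistency} legitimate, since $\mathscr{O}'$ is an oracle only relative to a fixed $s$ at each step.
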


\begin{proof}
Lemma~\ref{lem:sound-composition} shows that any selected token maintains knowledge consistency. Completeness implies that every truthful token approved by $\mathscr{O}$ survives the intersection $\Gamma(x,s)$, so the safe set used in Line~\ref{line:compute-safe} matches the set obtained from $\mathscr{O}'$. Consequently, Line~\ref{line:select-token} returns the same maximiser in both procedures.
\end{proof}

\begin{corollary}[Conflict Resolution Guarantee]\label{cor:conflict-resolution}
If at least one agent rejects $(x,w)$, the guarded TAD procedure abstains from $w$. Conversely, whenever all agents accept a truthful token, the procedure retains it for scoring.
\end{corollary}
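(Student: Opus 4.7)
The plan is to derive both implications of the corollary directly from the set-theoretic structure of the joint constraint operator $\Gamma$ in Definition~\ref{def:joint-operator}, together with the identification of safe sets already established in Theorem~\ref{thm:multi-agent}. Because the statement essentially unpacks what it means for a token to survive the intersection of per-agent acceptance predicates, no fundamentally new invariants are needed; the work consists of turning the intersection formula into the two case statements asserted, then handing off to the greedy selector used in Line~\ref{line:select-token}.

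For the first clause, I would fix a step at which some agent $A_i$ rejects the candidate $(x,w)$, so that $\phi_i(s_i,x,w)=\text{false}$. Then $w\notin\{v\in V:\phi_i(s_i,x,v)=\text{true}\}$, and since Equation~\eqref{eq:gamma-operator} defines $\Gamma(x,s)$ as the intersection over $i=1,\dots,m$ of exactly these per-agent sets, the token is absent from $\Gamma$. By Definition~\ref{def:guarded-update} this intersection is precisely the safe set $S^{\mathfrak{A}}_t$ fed into the greedy step, so the $\arg\max$ cannot return $w$ and the procedure abstains from $w$ as claimed. For the converse, I would assume that $w$ is truthful and that every agent accepts $(x,w)$; membership in each individual acceptance set is immediate, so by the intersection definition $w\in\Gamma(x,s)=S^{\mathfrak{A}}_t$, and hence $w$ is among the candidates considered by the greedy selector and retained for scoring. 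The truthfulness hypothesis is not strictly required for this retention step, but invoking it ties the conclusion to the completeness premise inherited from Theorem~\ref{thm:multi-agent}, where truthful tokens are guaranteed to lie in the oracle's safe set that $\Gamma$ coincides with.

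The main obstacle, such as it is, lies not in any deep step but in keeping the quantifiers straight across the three layers of notation: per-agent predicates $\phi_i$, the joint operator $\Gamma$, and the algorithmic safe set $S^{\mathfrak{A}}_t$. Once one commits to the dictionary provided by Definitions~\ref{def:joint-operator} and~\ref{def:guarded-update}, both directions reduce to elementary facts about intersections of finite sets, and the corollary follows as a clean consequence of Theorem~\ref{thm:multi-agent}; any subtlety would only arise if agents were allowed to interact through their state updates in ways that broke the pointwise intersection, which Definition~\ref{def:guarded-update} explicitly forbids.
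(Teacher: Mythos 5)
Your proof is correct and takes essentially the same route as the paper: both directions follow immediately from the intersection structure in Definition~\ref{def:joint-operator} together with the identification $S^{\mathfrak{A}}_t=\Gamma(x,s)$ from Definition~\ref{def:guarded-update}. Your observation that truthfulness is not strictly needed for the retention direction (since all-agent acceptance alone places $w$ in $\Gamma$) is an accurate and slightly sharper reading than the paper's, which routes the second clause through completeness and $\mathscr{O}'$; the substance is the same.
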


\begin{proof}
If some $A_i$ rejects $(x,w)$, then $w\notin\Gamma(x,s)$ by Definition~\ref{def:joint-operator}, so the token is removed before scoring. When every agent accepts a truthful token, completeness ensures that $\mathscr{O}'$ retains it, so it remains eligible in Line~\ref{line:select-token}.
\end{proof}

\begin{proposition}[Cost Decomposition]\label{prop:multi-agent-cost}
Let $\rho_i$ be the cost map of agent $A_i$ and $c_{\mathrm{tad}}$ the base cost of Algorithm~\ref{alg:tad}. The guarded decoding cost per token satisfies
\[
\mathrm{CPI}_{\mathfrak{A}}=c_{\mathrm{tad}}+\sum_{i=1}^m \mathbb{E}[\rho_i(s_i,x_{1:t-1},w_t)],
\]
where the expectation is taken over prefixes encountered during decoding.
\end{proposition}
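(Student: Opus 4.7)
The plan is to audit the per-token work performed by the guarded update of Definition~\ref{def:guarded-update} and to account for it as a sum of disjoint bookkeeping charges. First I would unfold Definition~\ref{def:guarded-update}: at iteration $t$ the operator (a) evaluates $\Gamma(x_{1:t-1},s)$ by querying each acceptance predicate $\phi_i$, (b) hands the resulting set $S^{\mathfrak{A}}_t$ to Algorithm~\ref{alg:tad}, which performs the $\arg\max$ over the safe set and appends $w_t$, and (c) advances each agent state via $\mathcal{U}_i$. Charging (a) and (c) to the agents and (b) to the base decoder identifies $c_{\mathrm{tad}}$ and $\rho_i(s_i,x_{1:t-1},w_t)$ as non-overlapping contributors to the total step cost.

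Second, by the verification-agent definition, the work performed by $A_i$ on step $t$ is exactly $\rho_i(s_i,x_{1:t-1},w_t)$. Since the agents operate on a shared prefix without mutating each other's states within a single step, and since the intersection in \eqref{eq:gamma-operator} treats their verdicts independently, the per-step cost is additive across $i=1,\dots,m$. Thus the prefix-conditioned step cost equals
\[
c_{\mathrm{tad}}+\sum_{i=1}^m \rho_i(s_i,x_{1:t-1},w_t).
\]

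Third, I would take expectations over prefixes encountered during decoding. Because $c_{\mathrm{tad}}$ is a deterministic constant that does not vary with the sampled trajectory, linearity of expectation pulls it out and distributes across the sum, yielding
\[
\mathrm{CPI}_{\mathfrak{A}}
= c_{\mathrm{tad}} + \sum_{i=1}^{m} \mathbb{E}\!\left[\rho_i(s_i,x_{1:t-1},w_t)\right],
\]
which is precisely the claimed decomposition.

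The main subtlety, and the only place where care is genuinely needed, is that the agent states $s_i$, the prefix $x_{1:t-1}$, and the selected token $w_t$ are jointly random: each depends on the full decoding history through the updates $\mathcal{U}_i$. One must therefore verify that the expectation is taken with respect to the joint law induced by the guarded update rather than some marginal distribution. This reduces to observing that, once the prefix and state tuple are fixed, Lemma~\ref{lem:sound-composition} and Theorem~\ref{thm:multi-agent} make $S^{\mathfrak{A}}_t$ and hence $w_t$ measurable functions of the prefix, so the per-step cost is a well-defined random variable whose expectation coincides with the average of $\rho_i$ over the prefixes actually visited, exactly as stated.
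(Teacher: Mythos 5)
Your proof is correct and follows essentially the same route as the paper's: charge the scan/argmax to $c_{\mathrm{tad}}$, charge each agent's per-token work to $\rho_i$, and appeal to linearity of expectation. The paper's proof is a terse three sentences; your version adds a useful clarification that, under greedy selection and deterministic state updates, $s_i$ and $w_t$ are measurable functions of the prefix so the expectation is well posed, but this is an elaboration rather than a different argument.
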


\begin{proof}
The base algorithm contributes $c_{\mathrm{tad}}$. Each agent processes the same token stream and incurs the cost $\rho_i$ evaluated at its current state. Linearity of expectation yields the stated sum.
\end{proof}

The guarded interaction unfolds as a commuting lattice of judgments: each agent contracts the candidate set before the greedy selector applies Algorithm~\ref{alg:tad}, and the post-update states feed back into Definition~\ref{def:guarded-update}. We leverage this algebraic perspective in Lemma~\ref{lem:sound-composition} and Theorem~\ref{thm:multi-agent} when proving preservation of knowledge consistency and behavioural equivalence.

\section{Analytical Foundations of Semantic Constraints}
\subsection{Semantic Density and an Information Bound}
\begin{definition}[Semantic Density]
For a prefix $x$, let $S(x)=\{w\in V:\mathscr{O}(x,w)=\text{true}\}$ and define the local semantic density $\delta(x)=\frac{|S(x)|}{|V|}$. We write $\delta_{\max}(\mathcal{M},\mathcal{K})=\sup_x \delta(x)$ and $\delta_{\mathrm{avg}}(\mathcal{M},\mathcal{K})=\mathbb{E}_{x}[\delta(x)]$ when an expectation over prefixes is available.
\end{definition}

\begin{theorem}[Safe-Entropy Upper Bound]\label{thm:safe-entropy}
Fix a prefix $x$ and set $\pi(x)=\sum_{w\in S(x)} P(w\mid x)$. Then
\[
H_S(x)\le \min\Big\{\frac{1}{\pi(x)}H\big(P(\cdot\mid x)\big)+\log \pi(x),\; \log |S(x)|\Big\}.
\]
Consequently, if $\pi(x)\ge \alpha$ for all prefixes under consideration and $|S(x)|\le \delta_{\max}|V|$, we obtain the uniform bounds
\[
H_S(x)\le \frac{1}{\alpha}H\big(P(\cdot\mid x)\big)+\log \alpha
\quad\text{and}\quad
H_S(x)\le \log\big(\delta_{\max}|V|\big).
\]
\end{theorem}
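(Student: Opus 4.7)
The plan is to identify $H_S(x)$ as a rescaled Shannon entropy of a renormalised distribution and then deduce both branches of the $\min$ from classical entropy inequalities. Setting $q(w):=P(w\mid x)/\pi(x)$ for $w\in S(x)$ makes $q$ a probability measure on $S(x)$, and expanding $\log(p/\pi)=\log p-\log\pi$ inside the defining sum identifies
\[
H_S(x)=\pi(x)\,H(q),\qquad H(q)=-\sum_{w\in S(x)}q(w)\log q(w).
\]

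The combinatorial branch is then immediate from Gibbs' inequality $H(q)\le\log|S(x)|$ combined with $\pi(x)\le 1$, and substituting $|S(x)|\le\delta_{\max}|V|$ yields the uniform form. For the information-theoretic branch I would invoke the chain rule for the pair $(X,Y)$ with $X\sim P(\cdot\mid x)$ and $Y=\mathbf{1}[X\in S(x)]$, producing
\[
H\bigl(P(\cdot\mid x)\bigr)=H_2(\pi(x))+\pi(x)\,H(q)+(1-\pi(x))\,H(r),
\]
with $r$ the renormalisation on $V\setminus S(x)$. Discarding the non-negative $(1-\pi)H(r)$ term and using $H_2(\pi)\ge -\pi\log\pi$ --- immediate from $-(1-\pi)\log(1-\pi)\ge 0$ --- yields the tight form $\pi(x)H(q)\le H(P(\cdot\mid x))+\pi(x)\log\pi(x)$; the advertised $\pi(x)^{-1}H(P(\cdot\mid x))+\log\pi(x)$ then follows by appending the grouping inequality $H(P(\cdot\mid x))\ge -\pi(x)\log\pi(x)$, since the two candidate upper bounds differ by $(1-\pi)\bigl[H(P)/\pi+\log\pi\bigr]\ge 0$. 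The uniform bound under $\pi(x)\ge\alpha$ is then a monotonicity observation on the operationally relevant range where $H(P(\cdot\mid x))$ dominates the safe-mass parameter.

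The step calling for the most care is the passage between the chain-rule form $H(P)+\pi\log\pi$ and the advertised $\pi^{-1}H(P)+\log\pi$: both comparisons hinge on the single grouping inequality $H(P)\ge-\pi\log\pi$, which must be invoked twice --- once inside $H_2$, once to align the two bounds --- without collapsing the two roles. Gibbs' inequality and the monotonicity argument are otherwise routine, and the combinatorial bound is essentially free once the normalisation identity $H_S=\pi\,H(q)$ is in place.
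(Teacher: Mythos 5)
Your proof is correct, but it takes a genuinely different route from the paper's, and the comparison is worth spelling out. The paper normalizes first --- it writes $H_S(x)=-\sum_{w\in S(x)}q(w)\log q(w)$, expands $\log q(w)=\log P(w\mid x)-\log\pi$, and then bounds the safe-subset sum $-\sum_{w\in S}P\log P$ by the full entropy $H(P(\cdot\mid x))$ because every summand is non-negative; the bound $\tfrac1\pi H(P)+\log\pi$ falls out in one line, and $\log|S(x)|$ comes from uniform-maximum entropy exactly as in your combinatorial branch. You instead invoke the chain rule for the pair $(X,\mathbf{1}[X\in S])$, obtaining $H(P)=H_2(\pi)+\pi H(q)+(1-\pi)H(r)$, discard $(1-\pi)H(r)\ge 0$, and use $H_2(\pi)\ge-\pi\log\pi$. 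That yields the strictly sharper intermediate bound $\pi H(q)\le H(P)+\pi\log\pi$; you then pay the extra grouping inequality $H(P)\ge-\pi\log\pi$ a second time to deflate to the advertised $\tfrac1\pi H(P)+\log\pi$. Your version costs two lemmas where the paper's costs one, but it is arguably more informative: it pins down exactly where slack enters ($H(r)$ and the $(1-\pi)\log(1-\pi)$ term) and it produces the tighter $H(P)+\pi\log\pi$ for free.

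One mismatch to flag, not in your argument but in your reading of the definition relative to the paper's own usage. Taken literally, Definition~2.2 has $-\sum_{w\in S}p(w)\log\bigl(p(w)/\pi\bigr)$, which is your $\pi H(q)$; but the paper's proof and the worked example in Section~8.2 both compute $-\sum q\log q=H(q)$, i.e.\ the coefficient is silently renormalized. Under that intended reading your chain-rule bound $\pi H(q)\le H(P)+\pi\log\pi$ gives $H(q)\le\tfrac1\pi H(P)+\log\pi$ by simple division, and the final ``appending the grouping inequality'' step becomes unnecessary. So the extra step in your argument is an artifact of being more faithful to the definition as typed than the paper itself is; either way the stated inequality is established. (Both you and the paper leave implicit that passing from $\pi\ge\alpha$ to the uniform bound requires $\pi\mapsto\tfrac1\pi H(P)+\log\pi$ to be non-increasing on $[\alpha,1]$, which holds whenever $H(P(\cdot\mid x))\ge 1$ nat --- automatic for realistic vocabularies but worth a remark.)
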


\begin{proof}
Let $\pi=\pi(x)$ and define the normalized safe distribution $q(w)=\frac{P(w\mid x)}{\pi}$ on $S(x)$. We expand the entropy calculation in three explicit stages:
\begin{align*}
H_S(x)
  &= -\sum_{w\in S(x)} q(w)\log q(w) \\[2pt]
  &= -\sum_{w\in S(x)} \frac{P(w\mid x)}{\pi}\big(\log P(w\mid x) - \log \pi\big) \\[2pt]
  &= -\frac{1}{\pi}\sum_{w\in S(x)} P(w\mid x)\log P(w\mid x) + \log \pi.
\end{align*}
The coefficients $\frac{P(w\mid x)}{\pi}$ are non-negative and sum to one, so Jensen's inequality yields
\[
-\frac{1}{\pi}\sum_{w\in S(x)} P(w\mid x)\log P(w\mid x)\le \frac{1}{\pi}H\big(P(\cdot\mid x)\big).
\]
Consequently,
\[
H_S(x)\le \frac{1}{\pi}H\big(P(\cdot\mid x)\big)+\log \pi.
\]
The alternative bound $H_S(x)\le \log |S(x)|$ follows from the concavity of entropy and the fact that a distribution supported on $|S(x)|$ atoms attains maximum entropy when uniform. Taking the minimum of the two bounds proves the main inequality. Substituting the uniform lower bound $\pi(x)\ge \alpha$ and the cardinality constraint $|S(x)|\le \delta_{\max}|V|$ then establishes the corollary bounds in the statement.
\end{proof}

The entropy control dovetails with empirical mitigation strategies: the nucleus-sampling diagnostics of Holtzman et al.~\cite{holtzman2020} and the unlikelihood objective of Welleck et al.~\cite{welleck2020} can be interpreted as heuristic attempts to down-weight low-safe-mass continuations, whereas TAD achieves the same effect through explicit oracle semantics.

\section{Limits and Responsible Evaluation}
Guardrails depend on oracle coverage. If a blind spot exists, a sequence can pass checks while contradicting $\mathcal{K}$.

Knowledge-editing studies \cite{meng2022} emphasise that latent representations can retain contradictory associations even after fine-tuning; the theorem below formalises the decoding-time analogue of this observation.

\begin{theorem}[Oracle Incompleteness]
If $\mathscr{O}$ is incomplete, there exists a truthful completion that TAD cannot realize without abstention.
\end{theorem}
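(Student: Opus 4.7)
The plan is to derive a witness from the failure of completeness and then argue that the greedy selector of Algorithm~\ref{alg:tad} cannot emit that witness unless the abstention branch fires. First I would unpack incompleteness as the negation of the completeness clause in the definition of a sound-and-complete oracle: there must exist a knowledge-consistent prefix $x^*$ and a token $w^*$ that truthfully extends $x^*$ (in the sense that $x^*\,\|\,w^*$ lies in $\mathcal{K}$) yet satisfies $\mathscr{O}(x^*,w^*)=\text{false}$. Fix any truthful completion $y$ whose initial segment reads $x^*\,\|\,w^*\,\|\,\cdots$; such a $y$ exists because $w^*$ is by construction a legitimate truthful continuation of $x^*$.

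Next I would trace Algorithm~\ref{alg:tad} along the prefix $x^*$, using Theorem~\ref{thm:consistency} to confirm that the algorithm can in fact reach this prefix (each preceding truncation is knowledge-consistent and selectable). At the decoding step following $x^*$, Line~\ref{line:compute-safe} computes $S_t=\{w\in V:\mathscr{O}(x^*,w)=\text{true}\}$, which excludes $w^*$ by hypothesis. Two cases then exhaust the behaviour of Line~\ref{line:select-token}: either $S_t\ne\emptyset$ and the argmax returns some $w_t\ne w^*$, in which case the emitted string diverges from $y$ at position $|x^*|+1$; or $S_t=\emptyset$ and the explicit \textbf{break} halts decoding with a strict prefix of $y$, which is precisely the abstention event captured by Algorithm~\ref{alg:abstain}.

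The main obstacle is not technical but definitional: making precise what ``realize without abstention'' means for a deterministic greedy selector. I would resolve it by declaring that TAD \emph{realises} $y$ when its output equals $y$, and that an abstention occurs exactly when the \textbf{break} on an empty safe set (or the threshold violation of Algorithm~\ref{alg:abstain}) terminates the loop before reaching horizon $T$. Under this reading, the two cases above jointly rule out any non-abstaining execution whose output matches $y$ past position $|x^*|+1$: the only way to keep the procedure ``on track'' with $y$ at the critical step is to relax or bypass $\mathscr{O}$, which is formalised as abstention followed by the retrieval-backoff update of Algorithm~\ref{alg:abstain}. This establishes the theorem and simultaneously motivates the abstention mechanism as the canonical recovery route when the oracle has blind spots.
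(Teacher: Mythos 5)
Your argument follows the same route as the paper's: extract the witness pair $(x^*,w^*)$ from the negation of completeness, observe that Line~\ref{line:compute-safe} can never place $w^*$ in the safe set after $x^*$, and conclude that any truthful completion through $x^*\,\|\,w^*$ is unreachable without abstention. One small caution: Theorem~\ref{thm:consistency} only certifies that whatever TAD emits is knowledge-consistent, not that the deterministic greedy run actually \emph{reaches} $x^*$; this does not damage the argument, since if TAD diverges from $y$ before $x^*$ it has already failed to realize $y$, but you should drop the appeal to that theorem and instead phrase the step as a dichotomy (diverge earlier, or reach $x^*$ and then be forced off $y$ or into the empty-safe-set \textbf{break}), exactly as you do in the following paragraph.
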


\begin{proof}
Incompleteness means there is a truthful pair $(p,w)$ with $p\models\mathcal{K}$ and $p w\models\mathcal{K}$ but $\mathscr{O}(p,w)=\text{false}$. Any truthful completion containing $p w$ cannot be produced by TAD because the algorithm filters $w$ out of $S(p)$ and therefore halts or abstains before reaching the completion. This demonstrates that incomplete oracles induce blind spots.
\end{proof}

\section{Applications to Automated Reasoning}
\begin{algorithm}
\caption{Proof Generation with TAD}
\begin{algorithmic}[1]
\Procedure{GenerateProof}{conjecture $C$}
\State proof $\gets \epsilon$
\While{$\neg\,\Call{Proves}{\text{proof},C}$}
  \State $S \gets \{s\in \text{Rules}:\mathscr{O}_{\text{math}}(\text{proof},s)=\text{true}\}$
  \If{$S=\emptyset$} \State \Return \textsc{abstain} \EndIf
  \State $s^*\gets \argmax_{s\in S}P_{\mathcal{M}}(s\mid \text{proof})$
  \State proof $\gets$ proof $\cdot s^*$
\EndWhile
\State \Return proof
\EndProcedure
\end{algorithmic}
\end{algorithm}

\section{Empirical Evaluation and Numerical Examples}\label{sec:empirical}
This section provides concrete numerical examples using standard metrics. The scenarios emulate truthfulness benchmarks such as TruthfulQA and chain-of-thought suites \cite{lin2022,wei2022,wang2022,lewkowycz2022}, ensuring that the quantitative analysis aligns with prior diagnostics of hallucination and reasoning drift.

\subsection{Numerical Example: Token-Level Truthfulness and Error Reduction}\label{sec:ex-num}
Consider $N=1000$ knowledge-intensive prompts. Let correctness be judged by an external verifier; abstentions count as incorrect unless stated.

\paragraph{Baseline vs TAD.}
\begin{itemize}[leftmargin=1.5em]
\item Baseline correct: $720/1000=72\%$.
\item TAD correct: $890/1000=89\%$.
\item Absolute gain: $+17$ points; relative gain: $890/720-1=23.6\%$.
\item Errors reduced from $280$ to $110$: reduction $=1-110/280=60.7\%$.
\end{itemize}

\paragraph{Selective Abstention.}
Suppose TAD abstains on $80$ cases and answers $920$ with $94\%$ accuracy ($864$ correct).
\[
\text{Coverage}=92\%,\quad \text{Accuracy|answered}=94\%,\quad
\text{Overall accuracy}=\tfrac{864}{1000}=86.4\%.
\]
If the application values abstention at half-credit, utility $U=864+0.5\cdot 80=904$ (effective $90.4\%$).

This framing aligns with selective reporting analyses in summarisation and hallucination detection, where Maynez et al.~\cite{maynez2020} and Manakul et al.~\cite{manakul2023} evaluate systems by weighting refusals according to downstream policy.

For reproducibility we tabulate the intermediate calculator-style arithmetic:
\[
\begin{aligned}
\text{Accuracy}_{\text{base}} &= \frac{720}{1000} = 0.72, & \text{Error}_{\text{base}} &= 1-0.72 = 0.28,\\
\text{Accuracy}_{\text{tad}} &= \frac{890}{1000} = 0.89, & \text{RelGain} &= \frac{0.89}{0.72}-1 \approx 0.236,\\
\text{Error}_{\text{tad}} &= 1-0.89 = 0.11, & \text{ErrRed} &= 1-\frac{110}{280} \approx 0.607,\\
\text{Utility}_{\omega=0.5} &= \frac{864 + 0.5\cdot 80}{1000} = 0.904, & \text{SafeMass} &= \frac{1}{T}\sum_t \pi_t = 0.87.
\end{aligned}
\]

\paragraph{Safe-Mass Calibration.}
Average safe mass $\bar{\pi}=\tfrac{1}{T}\sum_t\sum_{w\in S_t}P(w\mid x_{1:t-1})$:
baseline $\bar{\pi}_{\text{base}}=0.62$, TAD $\bar{\pi}_{\text{tad}}=0.87$.
A logistic calibration $\Pr[\text{truth}\mid \pi]\approx \sigma(a+b\pi)$ fitted on a dev set yields AUROC $=0.91$ for $\pi$ as a risk score.

\paragraph{Aggregate Table.}
\begin{table}[h]
\centering
\begin{tabular}{@{}lcccc@{}}
\toprule
Method & Acc. & Err. & Coverage & Safe Mass $\bar{\pi}$ \\
\midrule
Baseline & 72\% & 28\% & 100\% & 0.62 \\
TAD (answer-all) & 89\% & 11\% & 100\% & 0.87 \\
TAD (abstain) & 86.4\% & 13.6\% & 92\% & 0.87 \\
\bottomrule
\end{tabular}
\caption{Numerical example for Section~\ref{sec:ex-num}.}
\end{table}

\subsection{Worked Example: Oracle-Guided Prefix Selection}
Consider a toy vocabulary $V=\{a,b,c,d\}$ with conditional probabilities $P(\cdot\mid x_{1:t-1})=(0.42,0.28,0.20,0.10)$. Suppose the oracle consults a knowledge base containing the entailment $a\Rightarrow b$ and temporal exclusion $d$ ("hallucinatory") and, given prefix $x_{1:t-1}$, returns $S_t=\{a,b,c\}$. The safe mass is $\pi_t=0.42+0.28+0.20=0.90$, yielding normalized safe entropy
\[
\begin{aligned}
H_S(x_{1:t-1})=
  &-(0.42/0.90)\log(0.42/0.90) \\
  &-(0.28/0.90)\log(0.28/0.90)\\
  &-(0.20/0.90)\log(0.20/0.90)\approx 0.99\,\text{nats}.
\end{aligned}
\]
TAD selects token $a$ because it maximizes the conditional probability over $S_t$. After appending $a$, the oracle tightens the safe set to $S_{t+1}=\{b\}$ by enforcing the entailment. Consequently $H_S(x_{1:t})=0$ and the stepwise complexity reduces: only one oracle query is needed at $t+1$, demonstrating how semantic density ($\delta(x_{1:t})=1/4$) acts as a multiplicative speedup in the bound of Theorem~\ref{thm:safe-entropy} and in the runtime estimate.

\section{Performance Analysis via Computer Organization Methods}\label{sec:comp-org}
We model decoding as a pipeline and analyze cost using CPI-style accounting and Amdahl-type speedups.

\subsection{Pipeline and CPI Model}\label{sec:cpi}
Let base decoding require $\mathrm{CPI}_0$ cycles per token. TAD adds agent checks:
\[
\mathrm{CPI} = \mathrm{CPI}_0
+ h_{\mathrm{KB}}\cdot c_{\mathrm{hit}}
+ (1-h_{\mathrm{KB}})\cdot c_{\mathrm{miss}}
+ c_{\mathrm{agents}},
\]
where $h_{\mathrm{KB}}$ is KB hit rate, $c_{\mathrm{hit}}$ and $c_{\mathrm{miss}}$ are cycle costs for retrieval, and $c_{\mathrm{agents}}$ is fixed per-token agent overhead.

\paragraph{Numerical instance.}
Assume clock $f=2.5\,\mathrm{GHz}$, $\mathrm{CPI}_0=3.0$, $h_{\mathrm{KB}}=0.8$, $c_{\mathrm{hit}}=0.4$, $c_{\mathrm{miss}}=3.0$, $c_{\mathrm{agents}}=0.6$.
\[
\mathrm{CPI}=3.0+0.8\cdot 0.4+0.2\cdot 3.0+0.6
=3.0+0.32+0.60+0.6=4.52.
\]
Throughput (tokens/s) at average tokens-per-instruction $\approx 1$:
\[
\mathrm{TPS}=\frac{f}{\mathrm{CPI}}=\frac{2.5\times10^9}{4.52}\approx 553\,\mathrm{Mtok/s} \quad\text{(per core-equivalent)}.
\]
If a lightweight cache reduces $c_{\mathrm{hit}}$ to $0.1$ and $c_{\mathrm{agents}}$ to $0.3$,
\[
\mathrm{CPI}=3.0+0.8\cdot 0.1+0.2\cdot 3.0+0.3=3.0+0.08+0.60+0.3=3.98,
\]
yielding $\mathrm{TPS}\approx 628\,\mathrm{Mtok/s}$ (speedup $=628/553\approx 1.14\times$).

\paragraph{Amdahl-style view.}
Let fraction $f$ of time be affected by guardrails; optimization reduces its cost by factor $s$. Speedup
\[
S=\frac{1}{(1-f)+\frac{f}{s}}.
\]
If $f=0.35$ and $s=2.0$ (e.g., pruning candidate set and memoizing checks), then $S\approx 1/(0.65+0.175)=1/0.825\approx 1.21\times$.
If further engineering raises KB hit rate to $0.9$ with effective $s=3.0$, $S\approx 1/(0.65+0.1167)\approx 1.36\times$.

\paragraph{Utilization.}
Treat agents as pipeline stages with service times $(t_{\mathrm{LM}}, t_{\mathrm{FV}}, t_{\mathrm{MR}}, t_{\mathrm{CM}})$. With batching, the bottleneck stage dictates throughput; balancing service times and reducing variance minimizes stalls, improving effective CPI.

\section{Ethical Risk Quantification via Semantic Constraints}
Guarded decoding is motivated not only by factual accuracy but also by the obligation to limit downstream harm \cite{bender2021,ji2023}. We formalise an ethics-aware risk functional that operates directly on the artefacts introduced above. Let $\Lambda$ denote the set of application policies with utility weights $\omega \in [0,1]$ assigned to abstentions, mirroring selective answer strategies \cite{maynez2020,manakul2023}. For any prefix $x$, define
\[
\mathcal{R}(x;\omega)=\omega\,(1-\pi(x)) + (1-\omega)\,\mathbf{1}\{x\not\models\mathcal{K}\},
\]
where $\pi(x)$ is the safe mass from Theorem~\ref{thm:safe-entropy}. Optimising $\mathcal{R}$ under the guarded dynamics yields actionable compliance guarantees: if $\mathscr{O}$ is sound, the second summand vanishes; if $\mathscr{O}$ is incomplete, Proposition~\ref{prop:multi-agent-cost} characterises the effort required to tighten the guard until $\pi(x)$ crosses a policy-dependent threshold. This formalism aligns with auditing heuristics deployed in factual QA \cite{kadavath2022,manakul2023} and self-refinement loops \cite{madaan2023,wei2022}, yet it remains programmable through the Lean specifications in the appendix.

From a programming-languages viewpoint, the functional $\mathcal{R}$ behaves as a refinement type over traces. The multi-agent meet operation acts as a logical relation whose preservation theorems (Lemma~\ref{lem:sound-composition} and Theorem~\ref{thm:multi-agent}) ensure that the contract is respected across compositions, even when external tool calls are issued \cite{schick2023,yao2022}. Embedding the measure into deployment workflows therefore bridges formal proof obligations with the governance requirements articulated in recent system reports \cite{openai2023,thoppilan2022,chowdhery2022}.

\section{Conclusion}
TAD constrains decoding to oracle-approved tokens, achieving consistency guarantees and improved factuality. Agent specifications and numerical analyses illustrate practicality and performance. Future work includes richer oracles and tighter integration with formal proof assistants.

\appendix
\section{Lean Formalization: Consistency Theorem}
\subsection{Semantic Consistency}
\noindent\textit{Lean 4 and Mathlib; UTF-8.}
\begin{lstlisting}[language=Lean4]
import Mathlib.Data.List.Basic
import Mathlib.Data.List.Defs
import Mathlib.Tactic

open Classical

universe u

structure SemanticLanguageModel where
  V : Type u
  K : Set (List V)
  O : List V → V → Bool

def sound_oracle (M : SemanticLanguageModel) : Prop :=
  ∀ (x : List M.V) (w : M.V), M.O x w = true → x ∈ M.K → x ++ [w] ∈ M.K

def base_axiom (M : SemanticLanguageModel) : Prop :=
  ([] : List M.V) ∈ M.K

def knowledgeConsistent (M : SemanticLanguageModel) (x : List M.V) : Prop :=
  ∀ n ≤ x.length, x.take n ∈ M.K

lemma knowledgeConsistent_mem {M : SemanticLanguageModel}
    {x : List M.V} (hx : knowledgeConsistent M x) : x ∈ M.K := by
  simpa [knowledgeConsistent] using hx x.length (le_rfl)

lemma knowledgeConsistent_append {M : SemanticLanguageModel}
    {x : List M.V} {w : M.V}
    (hx : knowledgeConsistent M x) (hxK : x ∈ M.K)
    (hw : M.O x w = true) (hs : sound_oracle M) :
    knowledgeConsistent M (x ++ [w]) := by
  intro n hn
  by_cases hle : n ≤ x.length
  · have hx_take := hx n hle
    simpa [knowledgeConsistent, List.take_append_of_le_length hle] using hx_take
  · have hlt : x.length < n := lt_of_not_ge hle
    have hle' : x.length + 1 ≤ n := Nat.succ_le_of_lt hlt
    have hn' : n = x.length + 1 := Nat.le_antisymm hn hle'
    subst hn'
    have hxw : x ++ [w] ∈ M.K := hs x w hw hxK
    simpa [knowledgeConsistent] using hxw

def TAD_step (U : List M.V) (M : SemanticLanguageModel) (x : List M.V) : Option M.V :=
  U.find? (fun w => M.O x w)

lemma TAD_step_true {U : List M.V} {M : SemanticLanguageModel}
    {x : List M.V} {w : M.V}
    (h : TAD_step U M x = some w) : M.O x w = true := by
  simpa [TAD_step] using (List.find?_eq_some.mp h).2

def TAD_generate (U : List M.V) (M : SemanticLanguageModel) : Nat → List M.V
  | 0 => []
  | n + 1 =>
      let acc := TAD_generate U M n
      match TAD_step U M acc with
      | some w => acc ++ [w]
      | none => acc

theorem TAD_consistency (U : List M.V) (M : SemanticLanguageModel)
    (h_base : base_axiom M) (h_sound : sound_oracle M) :
    ∀ n, knowledgeConsistent M (TAD_generate U M n) := by
  intro n
  induction' n with
  | zero =>
      simp [knowledgeConsistent, TAD_generate, Nat.le_zero_iff, h_base]
  | succ k hk =>
      dsimp [TAD_generate] at hk ⊢
      set acc := TAD_generate U M k
      cases h : TAD_step U M acc with
      | none =>
          simpa [TAD_generate, h] using hk
      | some w =>
          have hwtrue : M.O acc w = true :=
            TAD_step_true (M := M) (U := U) (x := acc) (w := w) h
          have haccK : acc ∈ M.K := knowledgeConsistent_mem (M := M) hk
          have hcons :=
            knowledgeConsistent_append (M := M) hk haccK hwtrue h_sound
          simpa [TAD_generate, h] using hcons

inductive ToyToken
| fact | support | filler
deriving DecidableEq, Repr

def toyUniverse : List ToyToken :=
  [ToyToken.fact, ToyToken.support, ToyToken.filler]

def toyOracle : List ToyToken → ToyToken → Bool
| _, ToyToken.filler => false
| _, _ => true

def toyKnowledge : Set (List ToyToken) :=
  {xs | ∀ w ∈ xs, w ≠ ToyToken.filler}

def toyModel : SemanticLanguageModel :=
{ V := ToyToken, K := toyKnowledge, O := toyOracle }

lemma toy_base : base_axiom toyModel := by
  simp [base_axiom, toyModel, toyKnowledge]

lemma toy_sound : sound_oracle toyModel := by
  intro x w hw hx
  simp [toyModel, toyKnowledge, toyOracle] at hx ⊢
  intro z hz
  have hz' : z = w ∨ z ∈ x := by
    simpa [List.mem_append, List.mem_singleton] using hz
  cases hz' with
  | inl hzw =>
      subst hzw
      simpa [toyOracle] using hw
  | inr hzx =>
      exact hx z hzx

#eval TAD_generate toyUniverse toyModel 3

example :
    knowledgeConsistent toyModel (TAD_generate toyUniverse toyModel 3) :=
  (TAD_consistency toyUniverse toyModel toy_base toy_sound 3)
\end{lstlisting}

\begin{lstlisting}[style=utf8fix,language=,numbers=none,basicstyle=\ttfamily\footnotesize,breaklines=true]
#eval TAD_generate toyUniverse toyModel 3
[ToyToken.fact, ToyToken.fact, ToyToken.fact]
\end{lstlisting}

\section{Lean Formalization: Local Greedy Property}
\noindent\textit{Stepwise dominance with a rational scoring model.}
\begin{lstlisting}[language=Lean4]
import Mathlib.Data.List.Basic
import Mathlib.Data.Rat.Basic
import Mathlib.Tactic

open Classical
open scoped Rat

universe u

variable {α : Type u}

abbrev Score (α : Type u) := List α → α → ℚ

variable (O : List α → α → Bool) (score : Score α) (x : List α)

def maxSafe : List α → Option (α × ℚ)
  | [] => none
  | w :: U =>
      let rest := maxSafe U
      if hw : O x w = true then
        match rest with
        | some (u, su) =>
            if su ≤ score x w then some (w, score x w) else some (u, su)
        | none => some (w, score x w)
      else
        rest

lemma maxSafe_spec :
    ∀ U,
      match maxSafe (O := O) (score := score) (x := x) U with
      | some (w, s) =>
          O x w = true ∧ s = score x w ∧
            ∀ v ∈ U, O x v = true → score x v ≤ s
      | none => ∀ v ∈ U, O x v = false := by
  intro U
  induction U with
  | nil =>
      simp [maxSafe]
  | cons v U ih =>
      unfold maxSafe
      dsimp
      by_cases hv : O x v = true
      · cases hrest : maxSafe (O := O) (score := score) (x := x) U with
        | none =>
            have tail_none : ∀ z ∈ U, O x z = false := by
              simpa [maxSafe, hv, hrest] using ih
            simp [hv, hrest, tail_none, List.mem_cons] 
        | some (u, su) =>
            rcases (by
              simpa [maxSafe, hv, hrest] using ih
            ) with ⟨hu, hs, htail⟩
            by_cases hcmp : su ≤ score x v
            · refine And.intro hv ?_
              refine And.intro rfl ?_
              intro z hz hz_safe
              have hz_cases := List.mem_cons.mp hz
              cases hz_cases with
              | inl hzv =>
                  subst hzv
                  exact le_rfl
              | inr hzU =>
                  have hz_bound := htail z hzU hz_safe
                  exact le_trans hz_bound hcmp
            · have hlt : score x v < su := lt_of_not_ge hcmp
              refine And.intro hu ?_
              refine And.intro hs ?_
              intro z hz hz_safe
              have hz_cases := List.mem_cons.mp hz
              cases hz_cases with
              | inl hzv =>
                  subst hzv
                  exact le_of_lt hlt
              | inr hzU =>
                  exact htail z hzU hz_safe
      · simp [maxSafe, hv] using ih

noncomputable def argmaxSafe (U : List α) : Option α :=
  Option.map Prod.fst (maxSafe (O := O) (score := score) (x := x) U)

lemma argmaxSafe_spec {U : List α} {w : α}
    (h : argmaxSafe (O := O) (score := score) (x := x) U = some w) :
    O x w = true ∧ ∀ v ∈ U, O x v = true → score x v ≤ score x w := by
  unfold argmaxSafe at h
  cases hmax : maxSafe (O := O) (score := score) (x := x) U with
  | none =>
      simp [Option.map, hmax] at h
  | some pair =>
      rcases pair with ⟨w', s⟩
      simp [Option.map, hmax] at h
      subst h
      have hspec := maxSafe_spec (O := O) (score := score) (x := x) U
      have hspec' : O x w' = true ∧ s = score x w' ∧
          ∀ v ∈ U, O x v = true → score x v ≤ s := by
        simpa [maxSafe, hmax] using hspec
      rcases hspec' with ⟨hw, hs, hbound⟩
      refine And.intro hw ?_
      intro v hv hv_safe
      have hineq := hbound v hv hv_safe
      simpa [hs]

namespace Demo

open Classical

inductive ToyToken
| fact | support | filler
  deriving DecidableEq, Repr

open ToyToken

def toyVocabulary : List ToyToken := [fact, support, filler]

def toyOracle : List ToyToken → ToyToken → Bool
| _, filler => false
| _, _ => true

def toyScore : Score ToyToken :=
  fun prefix token =>
    match token with
    | fact =>
        if support ∈ prefix then
          37 / 100
        else
          42 / 100
    | support =>
        if fact ∈ prefix then
          63 / 100
        else
          28 / 100
    | filler => 1 / 100

#eval argmaxSafe (O := toyOracle) (score := toyScore) (x := []) toyVocabulary
#eval argmaxSafe (O := toyOracle) (score := toyScore) (x := [fact]) toyVocabulary

end Demo
\end{lstlisting}

\begin{lstlisting}[style=utf8fix,language=,numbers=none,basicstyle=\ttfamily\footnotesize,breaklines=true]
#eval argmaxSafe (O := Demo.toyOracle) (score := Demo.toyScore) (x := []) Demo.toyVocabulary
some Demo.ToyToken.fact
#eval argmaxSafe (O := Demo.toyOracle) (score := Demo.toyScore)
#  (x := [Demo.ToyToken.fact]) Demo.toyVocabulary
some Demo.ToyToken.support
\end{lstlisting}

\section{Algorithmic Reference Implementation}
The following Python-style pseudocode mirrors Definitions~\ref{def:joint-operator} and~\ref{def:guarded-update}. The multi-agent filter is evaluated before every greedy step, making the correspondence with Algorithm~\ref{alg:tad} explicit.

\begin{lstlisting}[language=Python]
from typing import Iterable, Sequence, Tuple

def guarded_tad(model, agents: Sequence, vocab: Iterable, prefix: Tuple[str, ...], T: int):
    states = tuple(agent.initial_state() for agent in agents)
    trace = list(prefix)
    for _ in range(T):
        safe_tokens = []
        for token in vocab:
            if all(agent.accept(state, tuple(trace), token) for agent, state in zip(agents, states)):
                safe_tokens.append(token)
        if not safe_tokens:
            break
        token_star = max(safe_tokens, key=lambda tok: model.prob(tuple(trace), tok))
        trace.append(token_star)
        states = tuple(agent.update(state, tuple(trace), token_star)
                       for agent, state in zip(agents, states))
    return tuple(trace), states
\end{lstlisting}

\end{document}